\frenchspacing  \setlength{\pdfpagewidth}{8.5in}  \setlength{\pdfpageheight}{11in}  
\title{The Partially Observable History Process}
\author{
Dustin Morrill, \textsuperscript{\rm $\dagger$}
    Amy R. Greenwald, \textsuperscript{\rm $\ddagger$}
    Michael Bowling\textsuperscript{\rm $\dagger$}
}
\DeclarePairedDelimiter{\abs}{\lvert}{\rvert}
\DeclarePairedDelimiter{\subex}{(}{)}
\DeclarePairedDelimiter{\subblock}{[}{]}
\DeclarePairedDelimiter{\tuple}{(}{)}
\DeclarePairedDelimiter{\set}{\{}{\}}
\newcommand{\Simplex}{\Delta}
\newcommand{\simplex}{\Simplex}
\newcommand{\bs}[1]{\bm{#1}}
\newcommand{\expectation}{\mathbb{E}}
\newcommand{\E}{\expectation}
\newcommand{\probability}{\mathbb{P}}
\newcommand{\Prob}{\probability}
\newcommand{\smallo}[1]{\operatorname{o}\subex{#1}}
\newcommand{\ind}[1]{\mathbb{1}\set*{#1}}
\newcommand{\given}{\,|\,}
\newcommand{\where}{\;|\;}
\newcommand{\PureStratSet}{\mathcal{X}}
\newcommand{\PureStrategySet}{\PureStratSet}
\newcommand{\pureStrat}{x}
\newcommand{\utility}{\upsilon}
\newcommand{\StrategySet}{\Pi}
\newcommand{\strategy}{\policy}
\newcommand{\strat}{\strategy}
\newcommand{\recDist}{\mu}
\newcommand{\Actions}{\mathcal{A}}
\newcommand{\regret}{\rho}
\newcommand{\supReward}{U}
\newcommand{\maxReward}{\supReward}
\newcommand{\SWAP}{\textsc{sw}}
\newcommand{\infoSet}{I}
\newcommand{\InfoSets}{\mathcal{I}}
\newcommand{\chance}{c}
\newcommand{\Histories}{\mathcal{H}}
\newcommand{\TerminalHistories}{\mathcal{Z}}
\newcommand{\playerChoice}{p}
\newcommand{\emptyHistory}{\varnothing}
\newcommand{\cfIv}{v}
\newcommand{\cfv}{\cfIv}
\newcommand{\DevSet}{\Phi}
\newcommand{\dev}{\phi}
\newcommand{\COUNTERFACTUAL}{\textsc{cf}}
\newcommand{\CF}{\COUNTERFACTUAL}
\newcommand{\policy}{\pi}
\newcommand{\reward}{r}
\newcommand{\Pohp}{\mathcal{G}}
\newcommand{\RewardFn}{\reward}
\newcommand{\ObservationFn}{\omega}
\newcommand{\ObservationSet}{\mathcal{O}}
\newcommand{\DaimonStratSet}{\Sigma}
\newcommand{\RandomReturn}{G}
\newcommand{\PureDaimonStratSet}{D}
\newcommand{\pureDaimonStrat}{d}
\newcommand{\InfoStateSet}{\mathcal{S}}
\newcommand{\infoState}{s}
\newcommand{\daimonStrat}{\sigma}
\newcommand{\updateFn}{u}
\newcommand{\initialState}{\infoState_{\emptyHistory}}
\newcommand{\belief}{\xi}
\newcommand{\agentActsFirst}{\iota}
\DeclareRobustCommand\onedot{\futurelet\@let@token\@onedot}
\def\@onedot{\ifx\@let@token.\else.\null\fi\xspace}
\def\eg/{\emph{e.g}\onedot} \def\Eg/{\emph{E.g}\onedot}
\def\ie/{\emph{i.e}\onedot} \def\Ie/{\emph{I.e}\onedot}
\def\cf/{\emph{c.f}\onedot} \def\Cf/{\emph{C.f}\onedot}
\def\vs/{\emph{vs}\onedot} \def\Vs/{\emph{Vs}\onedot}
\def\etc/{\emph{etc}\onedot}
\def\wrt/{with respect to} \def\dof/{d.o.f\onedot}
\def\etal/{\emph{et al}\onedot}
\def\viceversa/{\emph{vice-versa}}
\def\ow/{\emph{o.w}\onedot}
\def\whp/{w.h.p\onedot}
\def\apriori/{\emph{a priori}} \def\Apriori/{\emph{A priori}}
\def\ala/{\`{a} la}
\def\naive/{na\"{\i}ve} \def\Naive/{Na\"{\i}ve}
\def\rmPlus/{regret matching\textsuperscript{+}}
\def\rrmPlus/{RRM\textsuperscript{+}}
\def\rcfrPlus/{RCFR\textsuperscript{+}}
\def\cfrPlus/{CFR\textsuperscript{+}}
\def\NashConv/{\textsc{NashConv}}
\def\NashConvAUC/{$\overline{\textsc{NashConv}}$}
\def\heads/{\textsc{heads}}
\def\tails/{\textsc{tails}}
\def\even/{\textsc{even}}
\def\odd/{\textsc{odd}}
\def\efceFootnote/{3}
\newcommand\safeIncCounter[1]{\@ifundefined{c@#1}{\newcounter{#1}\stepcounter{#1}}{\stepcounter{#1}}}
\definecolor{offWhite}{RGB}{240,240,240}
\definecolor{grey}{RGB}{180,180,180}
\definecolor{darkgreen}{RGB}{0,125,0}
\definecolor{lime}{RGB}{255,200,0}
\definecolor{amiiBlue}{RGB}{16,72,118}
\definecolor{amiiPink}{RGB}{241,97,119}
\definecolor{amiiYellow}{RGB}{248,209,109}
\definecolor{amiiPurple}{RGB}{123,105,145}
\newcommand{\Input}{\State \textbf{Input:}}
  \newtheorem{theorem}{Theorem}
  \newtheorem{lemma}{Lemma}
  \newtheorem{definition}{Definition}
\newif\csname ifGin@setpagesize\endcsname
\newcommand{\todonote}[4][inline]{\safeIncCounter{#2NoteCounter}
  \todo[color=offWhite,bordercolor=#3,linecolor=#3,#1]{\textbf{\uppercase{#2}$_{\arabic{#2NoteCounter}}$:}~#4}}
\newcommand{\replaced}[3]{\def\counterPrefix{#1}
  \def\arrowMarker{#2}
  \def\replacedText{#3}
  \todo[color=offWhite,bordercolor=red,inline]{$\bs{\arrowMarker}$ \textbf{Replaced (\arabic{Replaced\counterPrefix{}NoteCounter})} \replacedText }}
\newcommand{\replacedStart}[2]{\def\user{#1}
  \def\text{#2}
  \safeIncCounter{Replaced#1NoteCounter}\replaced{\user}{\downarrow}{\text}}
\newcommand{\replacedEnd}[1]{\def\user{#1}
  \replaced{\user}{\uparrow}{}}
\newcommand{\issue}[3]{\todo[color=black,inline]{\textcolor{white}{$\bs{#2}$ \textbf{Issue \##3} (Part \arabic{Issue#3NoteCounter}) #1}}}
\newcommand{\issueChangeStart}[2][]{\safeIncCounter{Issue#2NoteCounter}\issue{#1}{\downarrow}{#2}}
\newcommand{\issueChangeEnd}[2][]{\issue{#1}{\uparrow}{#2}}
 \renewcommand{\todonote}[4][inline]{\ignorespaces}
\renewcommand{\issueChangeStart}[2][]{\ignorespaces}
\renewcommand{\issueChangeEnd}[2][]{\ignorespaces}
\renewcommand{\replacedStart}[2][]{\ignorespaces}
\renewcommand{\replacedEnd}[1][]{\ignorespaces}
\def\cmToEmFactor{2.3710630158366}
\tikzset{
  influenceArrow/.style={>=Triangle, -{>[scale=#1]}},
  influenceArrow/.default=0.4
}
\tikzset{node distance=0cm,inner sep=0cm}
\tikzset{braceDecoration/.style={thick,decorate,decoration={brace,#1}}}
\tikzstyle{devColor} = [draw=red]
\tikzstyle{dev} = [devColor, very thick]
\tikzstyle{followColor} = [draw=black]
\tikzstyle{follow} = [followColor, very thick]
\tikzstyle{infoColor} = [draw=cyan]
\tikzstyle{info} = [infoColor, very thick]
\tikzstyle{infoArrow} = [influenceArrow, thin, infoColor]
\tikzstyle{alt} = [draw=grey]
\tikzstyle{zeroProb} = [draw=grey]
\tikzstyle{rec} = [draw=black, densely dashed, very thick]
\def\stateMinimumRadius{0.18cm}
\tikzstyle{state} = [circle, draw=black, minimum size=2*\stateMinimumRadius, inner sep=0.5mm, fill=white]
\tikzstyle{util} = [inner sep=1mm]
\tikzstyle{behaveLabel} = [text width=2cm]
\tikzstyle{actionArrow} = [thick, >=Stealth, -{>[scale=0.7]}]
\tikzstyle{showPoint} = [shape=circle, fill=black, minimum size=0.3em]
\tikzstyle{conceptShape} = [rectangle, rounded corners, text centered, inner sep=0.2cm, fill=white, draw]
\tikzset{
    timeArrow/.style={thick, >=Stealth, -{>[scale=#1]}},
    timeArrow/.default={0.7}
}
\tikzset{
    flowArrow/.style={thick, >=Triangle, -{>[scale=#1]}},
    flowArrow/.default={0.7}
}
\tikzset{
    pointerArrow/.style={thick, >=Straight Barb, -{>[scale=#1]}},
    pointerArrow/.default={0.7}
}
\tikzstyle{colorBox} = [inner sep=\cmToEmFactor*0.1em, rounded corners, text=white, font=\footnotesize]
\tikzstyle{colorArrow} = [thick, rounded corners, pointerArrow]
\tikzstyle{colorOutline} = [ultra thick, rounded corners]
\newcommandx{\labeledSeq}[7][1={anchor=west}, 2={anchor=north west}]{
    \node(#3)
        [#2,yshift=-0.1em]
        at (#5)
        {#7};
    \node(#3Desc)
        [#1]
        at (#4 |- #3)
        {#6};
}
\newcommandx{\binTreeChildCoords}[3][1=0.9cm, 2=-0.35cm]{
  \coordinate(#3LCoord) at ($(#3.south west)+(-#1,#2)$);
  \coordinate(#3RCoord) at ($(#3.south east)+(#1,#2)$);
}
\newcommandx{\joinHistoriesInInfoSet}[3][1=]{
    \draw
        [ultra thick, rounded corners, #1]
        (#2.north west)
        rectangle
        (#3.south east);
}
\tikzset{text color/.style=normal text.fg}
\tikzset{
    bullets/.pic={
        \node(#1) {$\bullet \quad \bullet \quad \bullet$};
    }
}
 \let\cite\citep
\newcommand{\parencite}{\citep}
\newcommand{\textcite}{\citet}
\begin{document}
\maketitle
\begin{abstract}
  We introduce the partially observable history process (POHP) formalism for reinforcement learning.
  POHP centers around the actions and observations of a single agent and abstracts away the presence of other players without reducing them to stochastic processes.
  Our formalism provides a streamlined interface for designing algorithms that defy categorization as exclusively single or multi-agent, and for developing theory that applies across these domains.
  We show how the POHP formalism unifies traditional models including the Markov decision process, the Markov game, the extensive-form game, and their partially observable extensions, without introducing burdensome technical machinery or violating the philosophical underpinnings of reinforcement learning.
  We illustrate the utility of our formalism by concisely exploring observable sequential rationality, examining some theoretical properties of general immediate regret minimization, and generalizing the extensive-form regret minimization (EFR) algorithm.
\end{abstract}

\section{Introduction}

We develop the partially observable history process (POHP) that embodies the philosophical aspects of reinforcement learning.
The formalism uses a few elementary mechanisms to analyze a single agent that makes observations and takes actions.
The agent is responsible for managing their own representation of an environment that is, by default, massively more complicated than themselves.
They are also responsible for and capable of evaluating themselves against goals set by their designer or themselves.
To canonize the central role of the agent, our formalism abstracts away any other players without nullifying their agency.

The individual components of the POHP formalism are taken from two sequential decision-making frameworks, the extensive-form game (EFG)~\parencite{Kuhn53} and the partially observable Markov decision process (POMDP)~\parencite{smallwood1973pomdp}, along with a repeated game framework, the online decision process (see, \eg/, \textcite{greenwald2006bounds}).
The result is a sequential decision-making formalism that is conceptually simpler than either of its two sequential decision-making progenitors.
Other general formalisms such as the partially observable stochastic game~\parencite{hansen2004posg}, turn-taking partially-observable Markov game~\parencite{Greenwald2017ttpomg}, and factored observation stochastic games~\parencite{kovavrik2019fosg} bring with them unnecessary complications for agent-centric reinforcement learning.
The sequential decision-making setting presented by \textcite{farina2019ocoForSeqDps} shares spiritual similarities but it represents a less radical departure from the EFG model.
\textcite{srinivasan2018actor}'s presentation of the EFG model using reinforcement learning and Markov decision process (MDP) terminology had a substantial influence on this work.

The POHP model is not meant replace any of these established formalisms, but rather fill a particular niche.
We recommend our formalism in two cases: (i) when modeling a problem as an MDP would ignore imperfect information or the presence of other players, or (ii) when the extra structure of a more established model is unnecessary or burdensome.

\section{Partially Observable History Process}

\begin{figure}[t]
  \centering
  \begin{tikzpicture}[node distance=0cm,
  inner sep=0.1em,
  eastLabel/.style={xshift=-0.2em, font=\scriptsize, anchor=east},
  northEastLabel/.style={eastLabel, yshift=0.3em}]

  \coordinate(historyLineY) at (0, 2);
  \coordinate(leftEdge) at (-4.15, 0);

\node(H)
    [anchor=west]
    at ($(leftEdge |- historyLineY)+(3em, 0)$)
    {$H$};
  \node(HA)
    [anchor=west]
    at ($(H.east)+(4em, 0)$)
    {$HA$};
  \node(HAB)
    [anchor=west]
    at ($(HA.east)+(9em, 0)$)
    {$HAB$};
  \coordinate(rightEdge) at ($(HAB)+(3em, 0)$);
  \draw[->]
    (leftEdge |- historyLineY)
    --
    (H);
  \draw[->]
    (H)
    --
    (HA);
  \draw[->]
    (HAB)
    --
    (rightEdge |- HAB);

\node(daimon)
    [anchor=south, yshift=1em]
    at ($(HA.north east)!0.5!(HAB.north west)$)
    {$B$};
  \node(daimon/label)
    [eastLabel]
    at ($(daimon.south)!0.5!(daimon |- HA.north)$)
    {$\sim \daimonStrat(HA)$};
  \draw[->]
    (HA)
    -|
    (daimon);
  \draw[->]
    (HA -| daimon)
    --
    (HAB);
  \draw[->]
    (daimon)
    -|
    (HAB);

\def\observationGap{-1.8em}
  \node(O)
    [anchor=north, yshift=\observationGap]
    at (H.south)
    {$O$};
  \draw[->]
    (H)
    -- node[anchor=east, font=\scriptsize] {$\ObservationFn(H)$}
    (O.north -| H);

  \node(OPrime)
    [anchor=north, yshift=\observationGap]
    at (HAB.south)
    {$O'$};
  \draw[->]
    (HAB)
    -- node[anchor=east, font=\scriptsize] {$\ObservationFn(HAB)$}
    (OPrime.north -| HAB);

\coordinate(agentLineY)
    at ($(O.south -| H)+(0, -0.8em)$);

  \coordinate(labelCenter)
    at ($(agentLineY -| HA)!0.5!(agentLineY -| HAB)$);
  \node
    [anchor=south, font=\footnotesize, yshift=0.2em]
    at (labelCenter)
    {environment};
  \node
    [anchor=north, font=\footnotesize, yshift=-0.2em]
    at (labelCenter)
    {agent};

  \draw
    (O.south -| H)
    --
    (agentLineY);
  \draw
    (OPrime.south -| HAB)
    --
    (agentLineY -| HAB);
  \draw[densely dashed]
    (agentLineY -| leftEdge)
    --
    (agentLineY -| rightEdge);

  \def\stateGap{-3.5em}
  \node(S)
    [yshift=\stateGap]
    at (O |- agentLineY)
    {$S$};
  \node(S/label)
    [northEastLabel]
    at (S.north)
    {$\updateFn_{\ObservationSet}(\bar{S}, O)$};
  \draw[->]
    (agentLineY)
    --
    (S);
  \node(A)
    at ($(S -| HA)!0.5!(agentLineY -| HA)$)
    {$A$};
  \draw[->]
    (leftEdge |- S)
    --
    (S);
  \draw[->]
    (S)
    -|
    (A);
  \node
    [eastLabel]
    at ($(A |- S.north)!0.5!(A.south)$)
    {$\sim \strat(S)$};
  \draw[->]
    (A)
    --
    (HA);

\node(SPrime)
    at (S -| daimon)
    {$S'$};
  \node(SPrime/label)
    [northEastLabel]
    at (SPrime.north)
    {$\updateFn_{\Actions}(S, A)$};
  \draw[->]
    (S -| A)
    --
    (SPrime);
  \draw[->]
    (A)
    -|
    (SPrime);

\node(S2)
    [yshift=\stateGap]
    at (OPrime |- agentLineY)
    {$S''$};
  \node(S2/label)
    [northEastLabel]
    at (S2.north)
    {$\updateFn_{\ObservationSet}(S', O')$};
  \draw[->]
    (SPrime)
    --
    (S2);
  \draw[->]
    (agentLineY -| OPrime)
    --
    (S2);
  \draw[->]
    (S2)
    --
    (S2 -| rightEdge);

\def\returnYShift{-2em}
  \def\returnXShift{1.5em}
  \node(G)
    [yshift=\returnYShift, xshift=\returnXShift]
    at (S)
    {$G$};
  \node(G/label)
    [northEastLabel]
    at (G.north)
    {$\bar{G} + \RewardFn(O)$};
  \draw[->]
    (leftEdge |- G)
    --
    (G);

  \node(GPrime)
    [xshift=\returnXShift]
    at (S2 |- G)
    {$G'$};
  \node(GPrime/label)
    [northEastLabel]
    at (GPrime.north)
    {$G + \RewardFn(O')$};
  \draw[->]
    (G)
    --
    (GPrime);
  \draw[->]
    (GPrime)
    --
    (rightEdge |- GPrime);

\draw[->]
    ($(agentLineY)!0.5!(S)$)
    -|
    (G);
  \draw[->]
    ($(agentLineY -| S2)!0.5!(S2)$)
    -|
    (GPrime);
\end{tikzpicture}   \caption{The evolution of a POHP environment and agent.}
\end{figure}
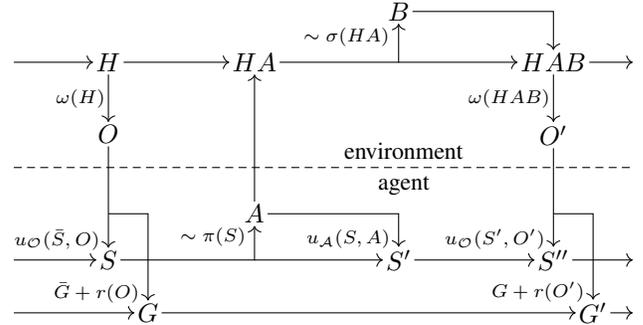

The \emph{partially observable history process} (\emph{POHP}) model begins from the premise that an \emph{agent} observes and influences an \emph{environment}.
We are principally concerned with the design of the agent and how well they navigate the environment.
The environment may change without the agent's input and we attribute these changes to a \emph{daimon}.
Inspired by depictions in Greek mythology, our daimon is an inexplicable force that partially determines the evolution of the environment and shapes the agent's growth.
The concept of a daimon is flexible enough that it can represent an adversary, a teammate, a teacher, chance, or any combination thereof.

\subsection{The Environment and Daimon}

The environment dynamics follow a simple continuing history model.
History in this model refers to a simple ledger that permanently records actions.
Given history $h$ from the set of possible histories $\Histories$ and action $a$, the next history is always $ha \in \Histories$.
The agent and daimon take turns choosing actions until the process terminates and the agent observes the daimon's actions only indirectly.
Histories are partially ordered action strings so we use $h \sqsubset h'$ to denote that $h$ is a predecessor of $h'$, $\abs{h}$ to denote the length of $h$, and use subscripts to reference substrings, \eg/, $h_i$ is the $i^{\text{th}}$ action in $h$ and $h_{\le n}$ is the first $n$ actions of $h$.

A POHP has four core objects:
(i) a Boolean $\agentActsFirst \in \set{0, 1}$ that indicates if the agent acts before the daimon, (ii) a function $\Actions$ that determines the set of legal actions after a given history, (iii) a function $\ObservationFn$ that generates the observation from a set of observations $\ObservationSet$ at a given history immediately following a daimon action, and (iv) a function $\gamma$ that determines the probability that the process continues after a daimon action.
Two more objects are required to describe a POHP in full generality: a set of initial histories $\Histories_{\emptyHistory}$ and a probability distribution over these histories $\belief$.
A POHP's history is initialized with one sampled from $\Histories_{\emptyHistory}$ according to $\belief$, but in most situations $\Histories_{\emptyHistory}$ consists of just the empty history, $\emptyHistory$.

The daimon behaves according to a \emph{behavioral strategy} (also called a \emph{policy}), $\daimonStrat$, that assigns a probability distribution over legal actions to every history where it is the daimon's turn to act,
$\Histories_{\ObservationSet} = \set{h \in \Histories \where \abs{h} \bmod 2 = \agentActsFirst}$.
In history $h \in \Histories_{\ObservationSet}$, the daimon chooses action $B \sim \daimonStrat(h)$ and the history advances to $hB$, at which point the process continues if $\Gamma \sim \gamma(hB)$ and terminates otherwise.
Since the agent waits for an observation in each $h \in \Histories_{\ObservationSet}$, we call them \emph{passive histories}, in contrast to the \emph{active histories}, $\Histories_{\Actions} = \Histories \setminus \Histories_{\ObservationSet}$, in which the agent acts.
We do not ascribe any a priori motivation to the daimon.

\subsection{An Abstract POHP Agent}

The agent need only implement the observation and action interfaces provided by the POHP so there are many ways to construct the agent.
The agent we work with has three conceptual modules: a state of mind, a behavior plan, and goals.
Their state provides the context with which behaviors are chosen to advance them toward their goals.
Formally, we define our POHP agent with a tuple, $\tuple*{\initialState, \updateFn_{\Actions}, \updateFn_{\ObservationSet}, \strat, \RewardFn}$.

\textbf{State of mind.}
A snapshot of the agent's state of mind is given concrete form in their \emph{information state}, which is set to a given initial information state $\initialState$ at the beginning of the POHP.
Information state evolves according to update functions $\updateFn_{\Actions}$ and $\updateFn_{\ObservationSet}$, which describe how actions and observations are processed, respectively.\footnote{Splitting information state updates into action and observation specific updates has a few benefits: (i) it avoids treating the initial observation update in POHPs where the daimon acts first as a special case, (ii) it allows us to refer to the agent's state of mind immediately after choosing an action, which we make use of in our reduction to Markov decision processes and in intermediate proof steps, and (iii) it allows the agent to forget the action they just chose without waiting for an observation, which could be useful in some applications involving asynchronous processing.
}
After the agent chooses action $a$ in information state $\infoState$, their information state is updated to $\infoState' = \updateFn_{\Actions}(\infoState, a)$ while they wait for an observation.
After receiving observation $o$, their information state is updated to $\updateFn_{\ObservationSet}(\infoState', o)$ and the agent chooses another action $a'$ to begin the cycle again.
Ultimately, each history $h \in \Histories$ yields an information state, so we recursively define a unified update function,
\begin{align*}
  \updateFn : h \mapsto
    \begin{cases}
        \initialState & \mbox{ if } h \in \Histories_{\emptyHistory}\\
        \updateFn_{\Actions}\subex*{ \updateFn\subex*{ h_{<\abs{h}} }, h_{\abs{h}} } & \mbox{ if } h_{<\abs{h}} \in \Histories_{\Actions}\\
        \updateFn_{\ObservationSet}\subex*{ \updateFn\subex*{ h_{<\abs{h}} }, \ObservationFn(h) } & \mbox{ o.w.}
    \end{cases}
\end{align*}
Each information state corresponds to an \emph{information set}, $\infoSet(\infoState) = \set{ h \where \updateFn(h) = \infoState }$, which is the set of histories the environment could be in given the agent's information state is $\infoState$.
We denote the set of information states that could ever be generated as $\InfoStateSet$, and we partition them into the passive information states where the agent awaits an observation, $\InfoStateSet_{\ObservationSet}$, and the active information states where the agent acts, $\InfoStateSet_{\Actions}$.
We overload
$\InfoStateSet_{\Actions}(\infoState, a)
  = \set{
    \updateFn_{\ObservationSet}\subex*{
      \updateFn_{\Actions}(\infoState, a),
      \ObservationFn(hab)}
  }_{h \in \infoSet(\infoState), \, b \in \Actions(ha)}$
as the set of child active information states following $\infoState$ and action $a$.

\textbf{Behavior.}
The agent acts by sampling actions from a behavioral strategy, $\strat \in \StrategySet$, where probability distributions over legal actions are assigned to information states.
At history $h$ in active information state $\infoState$, the agent chooses an action by sampling from their \emph{immediate strategy} at $\infoState$,
$\strat(\infoState) \in \simplex(\Actions(h))$,
where $\simplex(\Actions(h))$ is the probability simplex over $\Actions(h)$.
We assume that the agent can always determine the legal actions from their information state so we overload $\Actions(\infoState) = \Actions(h)$ for all $\infoState \in \InfoStateSet$ and $h \in \infoState$.

\textbf{Goals.}
A bounded reward function, $\RewardFn : \ObservationSet \to [-\maxReward, \maxReward]$, provides quantitative feedback to the agent about their progress toward their goals.
The \emph{return} (cumulative reward) that the agent acquires from active history $h \in \Histories_{\Actions}$ is
$\RandomReturn_h(\strat; \daimonStrat)
  = \sum_{i = 1}^{\infty}
    Y_i \RewardFn\big( \ObservationFn(H_i) \big)$,
where the initial history in the trajectory is $H_1 = h$,
the agent's action on each step is $A_i \sim \strat\big( \updateFn(H_i) \big)$,
the daimon's action on each step is $B_i \sim \daimonStrat(H_i A_i)$,
the history is updated as the concatenation $H_{i + 1} = H_i A_i B_i$, and
the continuation indicator is the product $Y_{i + 1} = Y_i \Gamma_i \in \set{0, 1}$ with $Y_1 = 1$ and $\Gamma_i \sim \gamma(H_i)$.

Generally, the agent's goal is to maximize their return.
The fact that the daimon's strategy is unknown and their actions are only partially observed prevents us from immediately formulating this goal as an optimization problem.
Neither can an equilibrium concept be proposed as a solution concept without presupposing incentives and a level of rationality for the daimon.
Hindsight rationality~\parencite{hsr2020}, in contrast, is well suited as a solution concept for POHPs as it focuses on self-improvement grounded in experience and requires no assumptions about the daimon.

However, since no history may ever repeat and repetition is a key requirement of hindsight rationality, we will only consider hindsight rationality in the context of a repeated POHP.
Before each round $t$ begins, the agent chooses strategy $\strat^t$ and the daimon chooses strategy $\daimonStrat^t$.
The POHP plays out according to these strategies, after which the agent receives reward information.
The agent can then compare the returns they achieved with $\strat^t$ with those they could have achieved with alternative behavior.\footnote{The agent may estimate the returns for alternative behavior using importance corrections if this information is not provided explicitly at the end of each round, similarly to how reward functions are estimated in adversarial bandit contexts (see, \eg/, \textcite{lattimore2020bandit}).}
This online decision process is well defined as long as the POHP terminates almost surely so that the agent is unlikely to be stuck in a single round forever.

\subsection{Reach Probabilities}

A derived property of POHPs that we will make use of later is the reach probability.
Consider random history $H$ generated according to agent strategy $\strat$, daimon strategy $\daimonStrat$, and continuation function $\gamma$.
The probability that a given history $h \sqsupseteq h_{\emptyHistory}$ is a prefix of a random history $h \sqsubseteq H$ follows from the chain rule of probability,
$\Prob_{\strat, \daimonStrat}[\bigcup_{\bar{h} \in \Histories_{\emptyHistory}} \bar{h} \sqsubseteq h \sqsubseteq H]
  =
    \prod_{i = \abs{h_{\emptyHistory}} + 1}^{\abs{h}} \Prob_{\strat, \daimonStrat}[h_i \given h_{< i}]$
where
\begin{align*}
  \Prob_{\strat, \daimonStrat}[h_i \given h_{< i}]
    = \begin{cases}
      \gamma(h_{< i}) \strat\subex*{h_i \given \updateFn(h_{< i})} &\mbox{if } h_{< i} \in \Histories_{\Actions}\\
      \daimonStrat(h_i \given h_{< i}) &\mbox{o.w.}
    \end{cases}
\end{align*}
We denote $\Prob_{\strat, \daimonStrat}[h] = \Prob_{\strat, \daimonStrat}[\bigcup_{\bar{h} \in \Histories_{\emptyHistory}} \bar{h} \sqsubseteq h \sqsubseteq H]$ and refer to this quantity as $h$'s \emph{reach probability}.
We can decompose
\begin{align*}
  \Prob_{\strat, \daimonStrat}[h]
    = &\left. \hspace{-1em} \prod_{
        i = \abs{h_{\emptyHistory}} + 1, \, i \bmod 2 = \agentActsFirst
      }^{\abs{h}} \hspace{-1em}
      \gamma(h_{< i - 1}) \daimonStrat(h_i \given h_{< i})
      \right\rbrace \Prob_{\daimonStrat}[h]\\
      &\left. \hspace{-1em} \prod_{i = \abs{h_{\emptyHistory}} + 1, \, i \bmod 2 = 1 - \agentActsFirst}^{\abs{h}} \hspace{-1em} \strat\subex*{h_i \given \updateFn(h_{< i})} \right\rbrace \Prob_{\strat}[h]
\end{align*}
according to the probability that the daimon and agent play their parts of $h$, where we have grouped the continuation probabilities with the daimon's and set $\gamma(h_{< \abs{h_{\emptyHistory}}}) = 1$.

The conditional probability
\begin{align*}
  \Prob_{\strat, \daimonStrat}[h' \given h]
    &= \Prob_{\strat, \daimonStrat}[h \sqsubseteq H, h' \sqsubseteq H] / \Prob_{\strat, \daimonStrat}[h]
\end{align*}
is the probability that history $h' \sqsubseteq H$ given $h \sqsubseteq H$.
If $h'$ and $h$ are unrelated in that
$h' \not\sqsubseteq h \not\sqsubset h'$,
then it is not possible for $H$ to realize both, so the joint probability
$\Prob_{\strat, \daimonStrat}[h, h'] = 0$,
and consequently
$\Prob_{\strat, \daimonStrat}[h' \given h] = 0$.
If $h' \sqsubseteq h$ then $H$ always realizes $h'$ when $h$ is realized, therefore,
$\Prob_{\strat, \daimonStrat}[h, h'] = \Prob_{\strat, \daimonStrat}[h]$
and
$\Prob_{\strat, \daimonStrat}[h' \given h] = 1$.
The last case is $h \sqsubseteq h'$, where
\begin{align*}
  \Prob_{\strat, \daimonStrat}[h' \given h]
    &= \Prob_{\strat, \daimonStrat}[h, h'] / \Prob_{\strat, \daimonStrat}[h]
    = \Prob_{\strat, \daimonStrat}[h'] / \Prob_{\strat, \daimonStrat}[h].
\end{align*}

\section{Representing Traditional Models}
\label{sec:backgroundDecModels}

\subsection{Games}

\begin{algorithm}[tb]
  \caption{The procedure for playing an $N$ player game in POHP-form.}
  \label{alg:efgWithPohps}
  \begin{algorithmic}[1]
  \Input\
    turn function
    $\playerChoice : \Histories \to \set{\chance} \cup \set{i}_{i = 1}^N$,\\
    \quad legal actions function
    $\Actions$,\\
    \quad terminal histories
    $\TerminalHistories \subseteq \Histories$\\
    \quad or continuation function
    $\gamma: \Histories \to \simplex{\set{0, 1}}$,\\
    \quad information partitions
    $\set{\InfoSets_i}_{i \in \set{\chance} \cup \set{j}_{j = 1}^N}$\\
    \quad or observation functions
    $\set{\ObservationFn_i : \Histories \to \InfoStateSet_i}_{i \in \set{\chance} \cup \set{j}_{j = 1}^N}$,\\
    \quad and utility functions
    $\set{\utility_i : \TerminalHistories \to [-\maxReward, \maxReward]}_{i = 1}^N$.
  \For{$i \in \set{\chance} \cup \set{j}_{j = 1}^N$}
    \State $\ObservationFn_i(h) \gets \infoSet$ \textbf{for} $h \in \infoSet \in \InfoSets_i$ \textbf{if} $\ObservationFn_i$ undefined
  \EndFor
  \State $\gamma \gets h \mapsto \ind{h \notin \TerminalHistories}$ \textbf{if} $\gamma$ undefined
  \State $H \gets \emptyHistory$
  \State $\Gamma \gets 1$
  \While{$\Gamma$}
    \State \textbf{send} $\ObservationFn_i(H)$ \textbf{to} player $\playerChoice(H)$
    \State \textbf{receive} $A \in \Actions(H)$ \textbf{from} player $\playerChoice(H)$
    \State $H \gets HA$
    \State \textbf{sample} $\Gamma \sim \gamma(H)$
  \EndWhile
  \For{$i = 1, 2, \ldots, N$}
    \State \textbf{send} $\ObservationFn_i(H) = \tuple{H, \utility_i(H)}$ \textbf{to} player $i$
  \EndFor
\end{algorithmic}
 \end{algorithm}

A \emph{game} is an $N$ player interaction where each player simultaneously chooses a strategy and immediately receives a payoff from a bounded utility function~\parencite{Neumann47}.
There may also be an extra ``chance player'', denoted $\chance$, who ``decides'' chance events like die rolls with strategy $\strat_{\chance}$.
A game described in this way is called a \emph{normal-form game} (\emph{NFG}).

For any given player, $i$, we can represent $i$'s view of the game with a POHP, $\Pohp_i$, where the agent represents $i$ and the daimon represents the other $N - 1$ players and chance in aggregate.
We can also represent chance's view of the game with a POHP where the agent's strategy is fixed to $\strat_{\chance}$.
The histories, action sets, and continuation function across all $N + 1$ of these POHPs are shared but the first turn indicator and observation functions are specific to each player.
The reward functions for each player must also reflect the game's payoffs.
After each player chooses an agent strategy for their POHP, all the POHPs are evaluated together, sharing the same history, and each player receives a return in their POHP that equals their payoff in the game.
Together, the set of POHPs, $\set{\Pohp_i}_{i \in \set{\chance} \cup \set{j}_{j = 1}^N}$, represents what we could call a \emph{POHP-form game}.
See \cref{alg:efgWithPohps} for a programmatic description of how a game can be played out in POHP form.

In each $\Pohp_i$, the daimon's strategy, $\daimonStrat_i$, must reflect those of the other players.
If we have a \emph{turn function} $\playerChoice: \Histories \to \set{\chance} \cup \set{j}_{j = 1}^N$ that determines which player acts after a given history $h$, we can constrain $\daimonStrat_i$ to conform to the agent strategies from the other POHPs as
$\daimonStrat_i(h) = \strat_{\playerChoice(h)}(\updateFn_{\playerChoice(h)}(h))$.

A game described with histories and turns is called an \emph{extensive-form game} (\emph{EFG})~\parencite{Kuhn53}.
Any NFG can be converted into extensive form by serializing each decision.
Of course, players who act later are not allowed to observe previous actions, and this is traditionally specified through information partitions.
Each player, $i$, is assigned a set of information sets as their \emph{information partition}, denoted $\InfoSets_i$.
Typically, EFGs also define a set of \emph{terminal histories}, $\TerminalHistories \subseteq \Histories$, which is constructed so that every history eventually terminates.
As in a NFG, payoffs are given to players upon termination.

Since the POHP and EFG share the same history-based progression, representing an EFG in POHP-form simply requires that information partitions, terminal histories, and utility functions are faithfully reconstructed in the POHP.
If player $i$'s observation function returns its given history's information set in $\InfoSets_i$ and player $i$'s observation update function replaces the current information state with its given observation, then the set of information sets on active information states reproduces $\InfoSets_i$.
Formally, $\set{ \infoSet(\infoState) }_{\infoState \in \InfoStateSet_{i, \Actions}} = \InfoSets_i$.
Constructing the information states for each player in this way ensures that all information partitions are respected.
We can add terminal histories to a POHP by setting $\gamma(h) = \ind{h \notin \TerminalHistories}$.
To respect the EFG's utility function for each player $i$, $\utility_i: \TerminalHistories \to [-\maxReward, \maxReward]$, we set player $i$'s rewards for all observations to zero except those following terminal histories, $z$, at which point $\reward_i(\ObservationFn_i(z)) = \utility_i(z)$.

\subsection{Markov Models}

The POHP model allows agents to construct their own complicated notions of state but forces conceptual simplicity on environment state, \ie/, its history.
However, a popular class of models are Markov models where the environment has a more complicated notion of state and this state evolves according to Markovian dynamics.
That is, there may be many histories that lead to the same environment state and the state on the next step is determined, up to stochasticity, by the current environment state and action.
The transition probabilities must be constant across each history in an environment state, otherwise transitions would depend on past environment states, violating the Markov property.
Agent information states in a POHP may lack the Markov property because the daimon's strategy may depend on the history.

A straightforward way to represent environment states is with the passive information states of the chance player in a POHP-form game.
At each of chance's passive histories $h$, each non-chance player plays an action in turn, which advances the history to $h' = h a_1 \ldots a_N$ where chance updates their information state to
$\infoState_{h'} = \updateFn_{\chance}(h') \in \InfoStateSet_{\chance, \Actions}$.
Chance then chooses which of their passive information states is next by sampling $A_{\chance}$ from $\strat_{\chance}(\infoState_{h'})$, resulting in a transition to
$\infoState_{h'A_{\chance}} = \updateFn_{\chance}(h'A_{\chance}) \in \InfoStateSet_{\chance, \ObservationSet}$.
A Markovian transition between $\infoState_{h}$ and $\infoState_{h'A_{\chance}}$ can be enforced by restricting chance's observation function so that
$\ObservationFn_{\chance}(ha_1 \ldots a_N)
  = \ObservationFn_{\chance}(\bar{h}a_1 \ldots a_N)$
for all joint player actions
$a_1 \ldots a_N$
and histories $\bar{h} \in \infoSet(\infoState_{h})$.
Enforcing this constraint for each history $h$ ensures that if
$\updateFn_{\chance}(\bar{h}) = \infoState_{h}$,
then, given joint player actions $a_1 \ldots a_N$,
\begin{align*}
\updateFn_{\chance}(\bar{h}a_1 \ldots a_N A_{\chance})
  &= \updateFn_{\chance, \Actions}(
    \updateFn_{\chance, \ObservationSet}(
      \updateFn_{\chance}(\bar{h}),
      \ObservationFn_{\chance}(\bar{h} a_1 \ldots a_N)),
    A_{\chance})\\
  &= \updateFn_{\chance, \Actions}(
    \updateFn_{\chance, \ObservationSet}(
      \infoState_{h},
      \ObservationFn_{\chance}(h a_1 \ldots a_N)),
    A_{\chance})\\
  &= \infoState_{h'A_{\chance}}
\end{align*}
with transition probability
$\strat_{\chance}(A_{\chance} \given \infoState_{h'})$
for all $\bar{h}$.

A general Markovian model is the \emph{partially observable Markov game} (\emph{POMG})~\parencite{hansen2004posg}.\footnote{A Markov game is also often called a ``stochastic game'', but a core feature of this model is Markovian transitions, not stochasticity.
  This leads us to prefer the term ``Markov game''.
}
In this model, each environment state represents its own NFG, so all players simultaneously choose an action and receive a reward that depends on the state and joint action selection.
The next state (and thus the next NFG) is determined by the current state and the joint action selection.
Just as we can serialize any NFG by introducing a turn function and selectively hiding actions,
we can serialize a POMG into a \emph{turn-taking POMG} (\emph{TT-POMG})~\parencite{Greenwald2017ttpomg} by serializing each of its component NFGs.

The TT-POMG formalism was developed to convert EFGs into Markov models~\parencite{Greenwald2017ttpomg}, so naturally the POHP-form of a POMG is similar to that of an EFG, except that chance's observations are constrained so that chance's passive information states can play the role of the POMG environment states.
The actions that each player plays are only revealed to the other players after chance's current passive information state transitions to the next one, corresponding to the POMG's next NFG.
Players also receive a reward at this time based on the utilities of the previous NFG.
This model is typically presented as a continuing process with discounting, and we can replicate the same setup by setting the continuation probability $\gamma(h)$ to the discount factor for each of chance's passive histories $h$ and $\gamma(h') = 1$ for all other histories $h'$.

Providing full observability to player $i$ in a POHP-form POMG is simply a matter of adding the constraint that chance's passive information states are isomorphic to player $i$'s active information states.
That is, there must be a bijection where chance's information state is
$\infoState' \in \InfoStateSet_{\chance, \ObservationSet}$
whenever player $i$'s information state is $s \in \InfoStateSet_{i, \Actions}$,
and \viceversa/.
Effectively, player $i$'s active information state is always chance's passive information state and thus also the POMG environment state.
One trivial way to enforce this constraint is for player $i$'s observation function to return chance's passive information state, \ie/,
$\ObservationFn_i(ha_1 \ldots a_{i - 1}) = \updateFn_{\chance}(h)$,
and for player $i$'s observation update function to replace their current state with the given observation.
If all players are granted full observability, then a POMG becomes, naturally, a \emph{Markov game}~\parencite{shapley1953stochasticGame}.
Furthermore, a single-player Markov game or POMG reduces to a \emph{Markov decision process} (\emph{MDP}) or \emph{partially observable MDP} (\emph{POMDP})~\parencite{smallwood1973pomdp}, respectively, and this is true when models are represented either in their canonical or POHP-forms.

\section{The Sub-POHP and Learning}

We now describe how sub-POHPs can be constructed in finite-horizon POHPs with timed updates, and show how observable sequential rationality~\parencite{hsr2020} is naturally defined in terms of sub-POHPs.

A POHP has a \emph{finite horizon} if every history eventually terminates deterministically.
We enforce this by selecting a subset of histories, $\TerminalHistories \subseteq \Histories$ where $\gamma(z) = 0$ for all $z \in \TerminalHistories$.
The agent's updates are \emph{timed} as long as the agent's action update function records the number of actions the agent has taken.
A finite horizon and timed updates ensure that the number of histories in each information set is finite and the same information state is never encountered twice before termination.
Thus, the information states are partially ordered and we can write $\infoState \prec \infoState'$ to denote that information state $\infoState$ is a predecessor of $\infoState'$.

\subsection{Beliefs and Realization Weights}

Given that the agent's information state is $\infoState$, how likely is it that the agent is in a particular history $h \in \infoSet(\infoState)$?
Traditionally, this is called the agent's \emph{belief} (about which history they are in) at $\infoState$.
According to Bayes' rule,
$\Prob_{\strat, \daimonStrat}[h \given \infoState]
  = \Prob_{\strat, \daimonStrat}[\infoState \given h] \Prob_{\strat, \daimonStrat}[h] / \Prob_{\strat, \daimonStrat}[\infoState]$.
Since $h \in \infoSet(\infoState)$,
$\Prob_{\strat, \daimonStrat}[\infoState \given h] = 1$.
The agent's information state is $\infoState$ only if the random history $H$ lands in $\infoSet(\infoState)$, so we can describe the event of realizing $\infoState$ as the union of history realization events.
Since we assume the agent's updates are timed, there is at most one prefix of $H$ in $\infoSet(\infoState)$, which means that each $h \sqsubseteq H$ event for $h \in \infoSet(\infoState)$ is disjoint.
The probability of their union is thus the sum
\begin{align*}
  \Prob_{\strat, \daimonStrat}[\infoState]
    = \Prob_{\strat, \daimonStrat}\subblock*{\bigcup_{\substack{
        h \in \infoSet(\infoState),\\
        \bar{h} \in \Histories_{\emptyHistory}
      }} \bar{h} \sqsubseteq h \sqsubseteq H}
    = \sum_{h \in \infoSet(\infoState)} \Prob_{\strat, \daimonStrat}[h].
\end{align*}
The agent's belief at $\infoState$ is $\belief^{\strat, \daimonStrat}_{\infoState}: h \mapsto
  \Prob_{\strat, \daimonStrat}[h] / \Prob_{\strat, \daimonStrat}[\infoState]$.

An assignment of beliefs to each information state is called a \emph{system of beliefs}.
A problem that arises in defining a complete system of beliefs from a given $\strat$--$\daimonStrat$ pair is that some information states may be unrealizable ($\Prob_{\strat, \daimonStrat}[\infoState] = 0$).
Different rationality assumptions lead to different ways of constructing complete belief systems and corresponding notions of equilibria (see, \eg/, \textcite{KrepsWilson82,breitmoser2010beliefsOffThePath,Dekel2015EpistemicGT}).
However, from a hindsight perspective, only realizable information states could have been observed by the agent, and only behavior in realizable states could have impacted the agent's return.
Thus, beliefs at unreachable information states are naturally left undefined.

As a consequence, information state realization probabilities hold special significance in hindsight analysis, as they determine whether or not a state is observable.
More generally, they provide a measure of importance to each information state.
Let $J$ be the random step in the trajectory $\set{H_i}_{i = 1}^{\infty}$ where $\updateFn(H_J) = \infoState$ or $J = \infty$ if information state $\infoState$ is never realized.
The return from $H_1$ can be split as
\begin{align*}
  &\RandomReturn_{H_1}(\strat; \daimonStrat)
    =
      \ind{J < \infty}
          \sum_{i = 1}^{J - 1} Y_i \RewardFn\big( \ObservationFn(H_i) \big)\\
      &\quad+ \ind{J = \infty}
        \sum_{i = 1}^{\infty} Y_i \RewardFn\big( \ObservationFn(H_i) \big)\\
      &\quad+
        \left.\ind{J < \infty}
          \sum_{i = J}^{\infty} Y_i \RewardFn\big( \ObservationFn(H_i) \big)
        \right\} \text{$\infoState$'s contribution.}
\end{align*}
Since $H_J \sim \belief_{\infoState}^{\strat, \daimonStrat}$, the expectation of $\infoState$'s contribution is
the \emph{realization-weighted expected return} from $\infoState$,
\begin{align}
\cfv_{\infoState}(\strat; \daimonStrat)
  =
    \Prob_{\strat, \daimonStrat}[\infoState]
      \E_{H \sim \belief^{\strat, \daimonStrat}_{\infoState}}\subblock*{
        \RandomReturn_H(\strat; \daimonStrat) },
\label{eq:realizationWeightedExpectedReturn}
\end{align}
where $\belief^{\strat, \daimonStrat}_{\infoState}$ is defined arbitrarily if $\Prob_{\strat, \daimonStrat}[\infoState] = 0$.

\subsection{Observable Sequential Rationality}

Here we capitalize on the generality of our POHP definition.
An agent belief can be used as a distribution over initial histories to define a POHP, which in this context we call a \emph{sub-POHP}.
Thus, every realizable information state $\infoState$ admits a sub-POHP where
$\belief^{\strat, \daimonStrat}_{\infoState}$
is the probability distribution over the histories in
$\infoSet(\infoState)$
and the turn indicator is $\ind{\infoState \in \InfoStateSet_{\Actions}}$.

\emph{Sequential rationality} can then be defined as optimal behavior within every sub-POHP with respect to an assignment of beliefs to unrealizable information states.
This definition is equivalent to sequential rationality in a single-player EFG~\parencite{KrepsWilson82}.
\emph{Observable sequential rationality}~\parencite{hsr2020} merely drops the requirement that play must be rational at unrealizable information states.
The key value determining observable sequential rationality is in fact \cref{eq:realizationWeightedExpectedReturn}, the realization-weighted expected return.

As with rationality in normal and extensive-form games, we can generalize the idea of observable sequential rationality to samples from a joint distribution of agent strategy--daimon strategy pairs (traditionally called a \emph{recommendation distribution}) and deviations.
A \emph{deviation} is a transformation that generates alternative agent behavior, \ie/, a function $\dev: \PureStrategySet \to \PureStrategySet$ where $\PureStrategySet$ is the set of \emph{pure strategies} for the agent that play a single action deterministically in every active information state.
We denote the complete set of such transformations, known as the set of \emph{swap deviations}, as $\DevSet^{\SWAP}_{\PureStrategySet}$.
We can now give a generalized definition of observable sequential rationality in a POHP.
\begin{definition}
  \label{def:obs-seq-rationality}
  A recommendation distribution,
  $\recDist \in \simplex(\PureStratSet \times \PureDaimonStratSet)$,
  where $\PureStratSet$ and $\PureDaimonStratSet$ are the sets of pure strategies for the agent and daimon, respectively,
  is observably sequentially rational for the agent with respect to a set of deviations,
  $\DevSet \subseteq \DevSet^{\SWAP}_{\PureStratSet}$,
  if the maximum benefit for every deviation,
  $\dev \in \DevSet$,
  according to the realization-weighted expected return from every information state,
  $\infoState \in \InfoStateSet$,
  is non-positive,
  \begin{align*}
    \E_{(\pureStrat, \pureDaimonStrat) \sim \recDist}\subblock*{
      \cfv_{\infoState}(\dev(\pureStrat); \pureDaimonStrat)
      - \cfv_{\infoState}(\dev_{\prec \infoState}(\pureStrat); \pureDaimonStrat)
    }
    \le 0,
  \end{align*}
  where $\dev_{\prec \infoState}$ is the deviation that applies $\dev$ only before $\infoState$, \ie/,
  $[\dev_{\prec \infoState} \pureStrat](\bar{\infoState}) = [\dev \pureStrat](\bar{\infoState})$
  if $\bar{\infoState} \prec \infoState$ and $\pureStrat(\bar{\infoState})$ otherwise.
\end{definition}
If $\dev$ always deterministically plays to reach $\infoState$, then this definition becomes equivalent to \textcite{hsr2020}'s.
The hindsight analogue to \cref{def:obs-seq-rationality} follows.
\begin{definition}
  \label{def:obs-seq-hindsight-rationality}
  Define the \emph{full regret} from information state $\infoState$ as
  $\regret_{\infoState}(\dev, \strat; \daimonStrat)
    = \cfv_{\infoState}(\dev(\strat); \daimonStrat) - \cfv_{\infoState}(\strat; \daimonStrat)$.
  An agent is observably sequentially hindsight rational if they are a no-full-regret learner in every realizable information state within a given POHP with respect to $\DevSet \subseteq \DevSet^{\SWAP}_{\PureStratSet}$.
  That is, the agent generates for any $T > 0$ a sequence of strategies, $\tuple{ \strat^t }_{t = 1}^T$, where
  $\lim_{T \to \infty}
      \frac{1}{T} \sum_{t = 1}^T \regret_{\infoState}(\dev, \strat^t; \daimonStrat^t) \le 0$
  at each $\infoState$ for each $\dev \in \DevSet$ under any sequence of daimon strategies $\tuple{ \daimonStrat^t }_{t = 1}^T$.
\end{definition}

\subsection{Local Learning}

Consider a local learning problem in a repeated finite-horizon POHP with timed updates based on the realization-weighted expected return at each active information state $\infoState$.
Given a set of deviations, $\DevSet \subseteq \DevSet^{\SWAP}_{\PureStrategySet}$, we can construct a set of truncated deviations,
$\DevSet_{\preceq \infoState} = \set{ \dev_{\preceq \infoState} }_{\dev \in \DevSet}$,
where each deviation in $\DevSet_{\preceq \infoState}$ applies a deviation from $\DevSet$ until after an action has been taken in $\infoState$, at which point the rest of the strategy is left unmodified.
Each truncated deviation represents a way that the agent could play to and in $\infoState$ so a natural local learning problem is for the agent to choose their actions at $\infoState$ so that there is no beneficial truncated deviation.

To apply deviations to the agent's behavioral strategies, notice that sampling an action for each information state under timed updates yields a pure strategy.
Thus, a behavioral strategy defines a probability distribution over the set of pure strategies, $\PureStratSet$.
We overload $\strat : \PureStratSet \to \simplex(\PureStratSet)$ to return the probability of a given pure strategy under behavioral strategy $\strat \in \StrategySet$.
From this perspective, $\strat$ may be called a \emph{mixed strategy}.
The transformation of $\strat$ by deviation $\dev$ is the pushforward measure $\dev(\strat)$ defined pointwise by
$[\dev\strat](\pureStrat') = \sum_{\pureStrat \in \dev^{-1}(\pureStrat')} \strat(\pureStrat)$
for all $\pureStrat' \in \PureStratSet$,
where $\dev^{-1} : \pureStrat' \mapsto \set{ \pureStrat \where \dev(\pureStrat) = \pureStrat'}$ is the pre-image of $\dev$.

The \emph{immediate regret} at information state $\infoState$ for not employing truncated deviation $\dev_{\preceq \infoState}$ is a difference in realization-weighted expected return under $\belief^{\dev_{\prec \infoState}(\strat), \daimonStrat}_{\infoState}$:
\begin{align*}
  &\regret_{\infoState}(\dev_{\preceq \infoState}, \strat; \daimonStrat)\nonumber
    = \cfv_{\infoState}(\dev_{\preceq \infoState}(\strat); \daimonStrat)
      - \cfv_{\infoState}(\dev_{\prec \infoState}(\strat); \daimonStrat)\\
    &= \Prob_{\dev_{\prec \infoState}(\strat), \daimonStrat}[\infoState]
      \E\subblock*{
        \RandomReturn_H(\dev_{\preceq \infoState}(\strat); \daimonStrat) - \RandomReturn_H(\strat; \daimonStrat) }.
\end{align*}
Intuitively, it is the advantage that $\dev_{\preceq \infoState}(\strat)$ has over $\strat$ in $\infoState$ assuming that the agent plays to $\infoState$ according to $\dev_{\prec \infoState}(\strat)$.\footnote{The term ``advantage'' here is chosen deliberately as immediate regret is analogous to advantage in MDPs~\parencite{baird1994advantageWrtOptimalPolicy}, with respect to a given rather than optimal policy (see, \eg/, \textcite{kakade2003sample}).}
Sadly, it can be impossible to prevent information state $\infoState$'s immediate regret with respect to $\DevSet_{\preceq \infoState}$ from growing linearly in a repeated POHP.

\begin{theorem}
  \label{thm:noImmediateRegretLearningIsImpossible}
  An agent with timed updates cannot generally prevent immediate regret from growing linearly in a finite-horizon repeated POHP.
\end{theorem}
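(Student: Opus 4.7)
My plan is to prove the impossibility by exhibiting a concrete counterexample POHP together with an adversarial daimon sequence such that every agent strategy sequence incurs cumulative immediate regret $\Omega(T)$ at some information state with respect to some deviation. The mechanism to exploit is the reach-probability weight $\Prob_{\dev_{\prec \infoState}(\strat), \daimonStrat}[\infoState]$ appearing in \cref{eq:realizationWeightedExpectedReturn}: by including ``trigger'' deviations in $\DevSet \subseteq \DevSet^{\SWAP}_{\PureStratSet}$ that deterministically play toward $\infoState$ at every ancestor information state, the reach-probability weight for those deviations becomes identically $1$, so that the per-round immediate regret reduces to the local deviation advantage at $\infoState$ regardless of how rarely the agent visits $\infoState$ on its own.

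Concretely, I would consider a POHP with a root information state $\infoState_0$ at which the agent chooses whether to advance to a downstream state $\infoState_1$ or to terminate with zero reward, and in which a matching-pennies-style interaction plays out at $\infoState_1$ against the daimon's subsequent action. Let $\DevSet$ contain the two pure trigger deviations $\dev^{UL}, \dev^{UR}$ that always play toward $\infoState_1$ and then fix $L$ or $R$ respectively at $\infoState_1$. The adversarial daimon reactively best-responds at $\infoState_1$ based on $p^t := \strat^t(L \given \infoState_1)$, setting continuation probability to $1$ and choosing $\ell$ when $p^t < 1/2$ and $r$ otherwise. A direct computation via \cref{eq:realizationWeightedExpectedReturn} then shows that on each round, at least one of $\regret^t_{\infoState_1}(\dev^{UL})$ and $\regret^t_{\infoState_1}(\dev^{UR})$ is bounded below by a positive constant while the other is non-positive.

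The main obstacle I expect to encounter is converting this per-round-per-direction lower bound into a cumulative lower bound for a \emph{single} fixed deviation, since an agent that adaptively shifts $p^t$ across rounds could defeat a naive pigeonhole argument by making which trigger deviation is bad alternate too. To address this I would either enrich $\DevSet$ with enough trigger variants that no mixing schedule on $p^t$ can simultaneously hedge all of them, or invoke Yao's minimax principle against a randomized daimon committed at round one to a fixed best-response rule, which guarantees a deterministic daimon sequence against which one specific deviation accumulates regret $\Omega(T)$. Making this randomization or pigeonhole step rigorous, and verifying that timed updates preclude the agent from amortizing feedback across rounds in a way that rescues the bound, is the most delicate part of the plan.
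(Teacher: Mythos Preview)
Your construction does not establish the impossibility. Because both $\dev^{UL}$ and $\dev^{UR}$ play $U$ at $\infoState_0$, the truncations $\dev^{UL}_{\prec \infoState_1}$ and $\dev^{UR}_{\prec \infoState_1}$ coincide, so the two immediate regrets at $\infoState_1$ share a common baseline $\cfv_{\infoState_1}(\dev_{\prec \infoState_1}(\strat);\daimonStrat)$. What remains is precisely the external regret of $\strat(\infoState_1)$ against the pure actions $L$ and $R$ in matching pennies, and any standard learner (\eg/, Hedge) drives both cumulative quantities to $O(\sqrt{T})$ even when the daimon picks $\daimonStrat^t$ after seeing $\strat^t$, since the Hedge bound holds for the realized reward sequence regardless of how it was generated. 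The obstacle you anticipate is therefore fatal rather than delicate: adding more trigger deviations that all play $U$ at $\infoState_0$ still leaves a learnable local problem, and Yao's principle cannot extract linear regret from a game whose minimax value is zero.

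The paper's argument exploits a different mechanism and needs no daimon at all. With two states $\infoState \to \infoState'$, two actions, reward $+1$ for matching actions and $-1$ otherwise, and the external deviations $\dev^{\to 1}$, $\dev^{\to 2}$, the truncations $\dev^{\to 1}_{\prec \infoState'}$ and $\dev^{\to 2}_{\prec \infoState'}$ \emph{differ} at $\infoState$. Under timed but non-perfect-recall updates both resulting histories map to the same $\infoState'$, so the single immediate strategy $\strat(\infoState')$ is evaluated against two incompatible baselines; a one-line computation yields
$\regret_{\infoState'}(\dev^{\to 1}_{\preceq \infoState'}, \strat; \daimonStrat) + \regret_{\infoState'}(\dev^{\to 2}_{\preceq \infoState'}, \strat; \daimonStrat) = 2$
on every round, forcing $\max_{\dev}\sum_{t=1}^{T}\regret_{\infoState'}(\dev_{\preceq \infoState'},\strat^t;\daimonStrat^t)\ge T$. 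The idea you are missing is that the impossibility comes from $\dev_{\prec \infoState'}$ altering the agent's \emph{own prior action}---and hence the baseline at $\infoState'$---in a way the timed-update information state cannot record, not from the reach weight or from adversarial play by the daimon.
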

\begin{proof}
Consider a two action, two information state POHP where information state $\infoState$ transitions to $\infoState'$ where the reward is $+1$ if the agent chooses the same action in both $\infoState$ and $\infoState'$, and $-1$ otherwise.
The two \emph{external} (constant) deviations, $\dev^{\to 1}$ and $\dev^{\to 2}$, that choose the same actions in both information states always achieve a value of $+1$.
At $\infoState'$, the agent has to choose between achieving value with respect to the play of $\dev^{\to 1}$ or $\dev^{\to 2}$ in $\infoState$.
If the agent chooses action \#1, then
$\cfv_{\infoState}(\dev^{\to 1}_{\prec \infoState}(\strat); \daimonStrat) = +1$
but
$\cfv_{\infoState}(\dev^{\to 2}_{\prec \infoState}(\strat); \daimonStrat) = -1$,
and \viceversa/ if they choose action \#2.
Therefore, the agent minimizes their maximum regret by always playing uniform random and suffering a regret of $+1$ on every round.
\end{proof}

What if we assume a stronger property, \emph{perfect recall}?
Perfect recall requires that every bit of information from every action and observation is encoded in the information state, \eg/, update functions that concatenate the previous information state with the given action or observation.
Agents with perfect recall remember each of their actions and observations.
This ensures that each information state $\infoState'$ is either the initial information state or has a single parent information state $\infoState$, \ie/, $\updateFn(h_{< \abs{h}}) = \infoState$ for each history $h \in \infoSet(\infoState')$.

A perfect-recall agent can only play to reach each history in a given information state, $\infoState$, equally, \ie/,
$\Prob_{\strat}[h] = \Prob_{\strat}[h']$ for all $h, h' \in \infoSet(\infoState)$.
If we define
$\Prob_{\strat}[\infoState] \propto \sum_{h \in \infoSet(\infoState)} \Prob_{\strat}[h]$,
then perfect recall implies that $\Prob_{\strat}[\infoState] = \Prob_{\strat}[h']$ for any history $h' \in \infoSet(\infoState)$.
The probability of realizing $\infoState$ simplifies to
\begin{align*}
  \Prob_{\strat, \daimonStrat}[\infoState]
    = \sum_{h \in \infoSet(\infoState)}
      \Prob_{\strat}[h] \Prob_{\daimonStrat}[h]
    = \Prob_{\strat}[\infoState]
      \sum_{h \in \infoSet(\infoState)}
        \Prob_{\daimonStrat}[h].
\end{align*}
The belief about any history $h \in \infoSet(\infoState)$ then simplifies to
\begin{align*}
  \belief_{\infoState}^{\strat, \daimonStrat}(h)
    = \frac{
      \Prob_{\strat}[h]\Prob_{\daimonStrat}[h]
    }{
      \Prob_{\strat}[\infoState]
      \sum_{h \in \infoSet(\infoState)}
        \Prob_{\daimonStrat}[h]
    }
    = \frac{
      \Prob_{\daimonStrat}[h]
    }{
      \sum_{h \in \infoSet(\infoState)}
        \Prob_{\daimonStrat}[h]
    }.
\end{align*}
The realization-weighted expected return simplifies to
\begin{align*}
  \cfv_{\infoState}(\strat; \daimonStrat)
    &= \Prob_{\strat}[\infoState]
      \underbrace{
        \sum_{h \in \infoSet(\infoState)}
          \Prob_{\daimonStrat}[h]
          \E\subblock*{\RandomReturn_h(\strat; \daimonStrat)}
      }_{\cfv^{\CF}_{\infoState}(\strat; \daimonStrat)}.
\end{align*}
The sum denoted $\cfv^{\CF}_{\infoState}(\strat; \daimonStrat)$ is recognizable as the \emph{counterfactual value}~\parencite{cfr} of $\infoState$,
which does not depend on $\strat$'s play at $\infoState$'s predecessors.
Immediate regret becomes weighted immediate \emph{counterfactual regret},
\begin{align*}
  \regret_{\infoState}(\dev_{\preceq \infoState}, \strat; \daimonStrat)
    = \Prob_{\dev_{\prec \infoState}(\strat)}[\infoState]
      \subex*{
        \cfv^{\CF}_{\infoState}(\dev_{\preceq \infoState}(\strat); \daimonStrat)
        - \cfv^{\CF}_{\infoState}(\strat; \daimonStrat)}.
\end{align*}

Since the counterfactual value function does not depend on $\strat$'s play at $\infoState$'s predecessors, perfect recall avoids the difficulty that leads to \cref{thm:noImmediateRegretLearningIsImpossible} and allows a reduction from minimizing immediate regret to minimizing \emph{time selection regret} in the \emph{prediction with expert advice} setting~\parencite{blum2007time-selection}.
In a repeated POHP where the agent and daimon choose $\strat^t \in \StrategySet$ and $\daimonStrat^t \in \DaimonStratSet$ on each round $t$ the (time dependent) counterfactual value function
$t \mapsto \cfv^{\CF}_{\infoState}(\cdot; \daimonStrat^t)$
fills the role of the prediction-with-expert-advice reward function and the (time dependent) reach probability function
$w_{\infoState, \dev}: t
  \mapsto \Prob_{\dev_{\prec \infoState}(\strat^t)}[\infoState]$
fills the role of a time selection function.
The growth of cumulative immediate regret can therefore be controlled, in principle, to a sublinear rate by deploying \textcite{blum2007time-selection}'s algorithm or time selection regret matching~\parencite{edl2021}.

\subsection{General Immediate Regret Minimization}

The algorithm that applies a no-time-selection-regret algorithm to minimize immediate regret in every active information state generalizes the \emph{extensive-form regret minimization} (\emph{EFR}) algorithm~\parencite{edl2021} in that $\DevSet \subseteq \DevSet^{\SWAP}_{\PureStrategySet}$ may be any set of deviations rather than a set of behavioral deviations.
Just as \textcite{edl2021} shows that EFR is hindsight rational in EFGs, we can prove that general immediate regret minimization achieves the same in POHPs, though now we can easily present this result as a consequence of observable sequential hindsight rationality.
In principle, our generalized algorithm could compete with the set of swap deviations when given this set (or the set of internal deviations) as a parameter argument, however, circular dependencies between immediate strategies at different information states prevents our algorithm from being efficiently implemented with such a deviation set.

Observable sequential hindsight rationality depends on full regret so we relate immediate regret to full regret with two lemmas and conclude with the algorithm's regret bound.
\begin{lemma}
  In a finite-horizon POHP, the realization-weighted expected return of active information state $\infoState$ under perfect recall recursively decomposes as
  \begin{align*}
    \cfv_{\infoState}(\strat; \daimonStrat)
      &=
        \Prob_{\strat}[\infoState] \RewardFn_{\infoState}(\strat; \daimonStrat)
        + \hspace{-2em} \sum_{
          \infoState' \in \bigcup_{a \in \Actions(\infoState)}
            \InfoStateSet_{\Actions}(\infoState, a)
        } \hspace{-2em}
          \cfv_{\infoState'}(\strat; \daimonStrat)
  \end{align*}
  where
  $\RewardFn_{\infoState}(\strat; \daimonStrat)
    = \sum_{h \in \infoSet(\infoState)}
      \Prob_{\daimonStrat}[h]
      \E\subblock*{\RewardFn(\ObservationFn(hAB))}$,
  and expectations are taken over $A \sim \strat(\infoState)$ and $B \sim \daimonStrat(hA)$.
\end{lemma}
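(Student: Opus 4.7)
The plan is to first invoke the perfect-recall simplification derived immediately before the lemma, then expand the expected return at each $h \in \infoSet(\infoState)$ by peeling off the reward collected at the next observation, and finally regroup the remaining continuation pieces by the child active information state they enter. Perfect recall will be used twice: once to write $\Prob_{\strat,\daimonStrat}[\infoState]$ as $\Prob_\strat[\infoState] \cdot \sum_{h} \Prob_\daimonStrat[h]$, and again to ensure that each child $\infoState'$ has $\infoState$ as its unique parent, so that the histories in $\infoSet(\infoState')$ partition cleanly into blocks indexed by their prefix in $\infoSet(\infoState)$.

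Concretely, I would first apply the perfect-recall identity to obtain
\[
\cfv_\infoState(\strat; \daimonStrat) = \Prob_\strat[\infoState] \sum_{h \in \infoSet(\infoState)} \Prob_\daimonStrat[h] \, \E[\RandomReturn_h(\strat; \daimonStrat)].
\]
For each $h$, I would condition on the agent action $A \sim \strat(\infoState)$, the daimon response $B \sim \daimonStrat(hA)$, and the continuation indicator to split $\E[\RandomReturn_h(\strat;\daimonStrat)]$ into the expected reward $\E[\RewardFn(\ObservationFn(hAB))]$ collected at the next observation plus the expected continuation return from $hAB$ onward. Summing the reward piece across $\infoSet(\infoState)$ with weights $\Prob_\daimonStrat[h]$ and multiplying by $\Prob_\strat[\infoState]$ reproduces $\Prob_\strat[\infoState]\RewardFn_\infoState(\strat; \daimonStrat)$ by the definition of $\RewardFn_\infoState$, which is the first term of the claimed decomposition.

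For the continuation, each realized triple $(h, a, b)$ produces a next active history $h' = hab$ whose information state, by timed updates and perfect recall, is a unique $\infoState' \in \InfoStateSet_{\Actions}(\infoState, a)$ having $\infoState$ as its sole parent. Grouping the continuation terms by $\infoState'$, I would apply the chain-rule factorization of reach probabilities displayed earlier in the excerpt to identify $\Prob_\strat[h] \cdot \strat(a \given \infoState)$ as the agent-side contribution to $\Prob_\strat[h']$, and $\Prob_\daimonStrat[h] \cdot \daimonStrat(b \given ha)$, together with the corresponding $\gamma$ factor, as the pieces of $\Prob_\daimonStrat[h']$. Because perfect recall makes $\Prob_\strat[h']$ constant across $h' \in \infoSet(\infoState')$, each group collapses to $\Prob_\strat[\infoState'] \sum_{h' \in \infoSet(\infoState')} \Prob_\daimonStrat[h'] \E[\RandomReturn_{h'}(\strat;\daimonStrat)] = \cfv_{\infoState'}(\strat;\daimonStrat)$, and summing over the children yields the second term.

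The main obstacle is the bookkeeping in the continuation step: I need to verify that each child history $h' \in \infoSet(\infoState')$ is indexed by exactly one $(h, a, b)$ triple arising from $\infoState$, which is precisely what perfect recall plus timed updates guarantees, and that the agent, daimon, and continuation factors scattered across $\Prob_\strat[h]$, $\strat(a \given \infoState)$, $\gamma$, $\Prob_\daimonStrat[h]$, and $\daimonStrat(b \given ha)$ recombine into the desired reach probabilities $\Prob_\strat[h']$ and $\Prob_\daimonStrat[h']$. Once that matching is in place the identity follows from linearity of expectation together with a second application of the perfect-recall factorization already used in the first step.
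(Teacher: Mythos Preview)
Your proposal is correct and follows essentially the same route as the paper: apply the perfect-recall factorization $\cfv_\infoState = \Prob_\strat[\infoState]\,\cfv^{\CF}_\infoState$, unroll the return one agent--daimon step to separate the immediate reward term $\RewardFn_\infoState$, and regroup the continuation contributions by child active information state using the chain-rule factorization of reach probabilities. The only organizational difference is that the paper inserts an explicit intermediate stop at the passive state $\updateFn_{\Actions}(\infoState,a)$ (their \cref{eq:rwerDecomposition,eq:passiveToActiveValue}) before summing over the active children $\infoState' \in \InfoStateSet_{\Actions}(\infoState,a)$, whereas you collapse that intermediate step and group directly by $\infoState'$; both arrive at the same identity by the same mechanism.
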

\begin{proof}
  The counterfactual value decomposes as
  \begin{align*}
    \cfv^{\CF}_{\infoState}(\strat; \daimonStrat)
      &= \sum_{h \in \infoSet(\infoState)}
        \Prob_{\sigma}[h] \E\subblock*{
          \RewardFn(\ObservationFn(hAB))
          + \Gamma \RandomReturn_{hAB}(\strat; \daimonStrat)
        }\\
      &= \RewardFn_{\infoState}(\strat; \daimonStrat)
        + \E\subblock{\hspace{-1.5em}
          \sum_{h \in \infoSet(\infoState), b \in \Actions(hA)} \hspace{-1.5em}
              \Prob_{\sigma}[hAb] \RandomReturn_{hAb}(\strat; \daimonStrat)
            }\\
      &= \RewardFn_{\infoState}(\strat; \daimonStrat)
        + \E\subblock*{
          \cfv^{\CF}_{\updateFn_{\Actions}(\infoState, A)}(\strat; \daimonStrat)},
  \end{align*}
  where $\Gamma \sim \gamma(h)$.
  Multiplying by the reach weight,
  \begin{align}
    &\cfv_{\infoState}(\strat; \daimonStrat)\nonumber\\
      &= \Prob_{\strat}[\infoState] \RewardFn_{\infoState}(\strat; \daimonStrat)
        + \sum_{a \in \Actions(\infoState)}
          \Prob_{\strat}[\infoState] \strat(a \given \infoState)
          \cfv^{\CF}_{\updateFn_{\Actions}(\infoState, a)}(\strat; \daimonStrat)\nonumber\\
      &= \Prob_{\strat}[\infoState] \RewardFn_{\infoState}(\strat; \daimonStrat)
        + \sum_{a \in \Actions(\infoState)}
          \cfv_{\updateFn_{\Actions}(\infoState, a)}(\strat; \daimonStrat).
    \label{eq:rwerDecomposition}
\shortintertext{Furthermore,}
&\cfv_{\updateFn_{\Actions}(\infoState, a)}(\strat; \daimonStrat)\nonumber\\
      &=
        \hspace{-1.2em} \sum_{\infoState' \in \InfoStateSet_{\Actions}(\infoState, a)}
          \sum_{
            h \in \updateFn_{\Actions}(\infoState, a)
          } \hspace{-1em}
            \ind{\updateFn(h) = \infoState'}
            \Prob_{\strat, \daimonStrat}[h]
            \E \subblock*{
              \RandomReturn_{h}(\strat; \daimonStrat)
            }\nonumber\\
      &=
        \hspace{-1.2em} \sum_{\infoState' \in \InfoStateSet_{\Actions}(\infoState, a)}
          \underbrace{
            \sum_{h' \in \infoSet(\infoState')}
              \Prob_{\strat, \daimonStrat}[h']
              \E \subblock*{
                \RandomReturn_{h'}(\strat; \daimonStrat)
              }
          }_{\cfv_{\infoState'}(\strat; \daimonStrat)}.
      \label{eq:passiveToActiveValue}
  \end{align}
  Substituting \cref{eq:passiveToActiveValue} into \cref{eq:rwerDecomposition} completes the proof.
\end{proof}
\begin{lemma}
  \label{lemma:regretDecomposition}
  In a finite-horizon POHP, the full regret with respect to $\dev \in \DevSet^{\SWAP}_{\PureStrategySet}$ under perfect recall at active information state $\infoState$ recursively decomposes as
  \begin{align*}
    \regret_{\infoState}(\dev, \strat; \daimonStrat)
&=
        \regret_{\infoState}(\dev_{\preceq \infoState}, \strat; \daimonStrat)
        + \hspace{-2.2em} \sum_{
          \infoState' \in \bigcup_{a \in \Actions(\infoState)}
            \InfoStateSet_{\Actions}(\infoState, a)
        } \hspace{-2.2em}
          \regret_{\infoState'}(\dev, \strat; \daimonStrat).
  \end{align*}
\end{lemma}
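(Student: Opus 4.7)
I will apply the preceding realization-weighted expected return decomposition lemma to both $\cfv_{\infoState}(\dev(\strat); \daimonStrat)$ and $\cfv_{\infoState}(\strat; \daimonStrat)$ and subtract, which immediately produces
\begin{align*}
  \regret_{\infoState}(\dev, \strat; \daimonStrat)
    &= \bigl[\Prob_{\dev(\strat)}[\infoState] \RewardFn_{\infoState}(\dev(\strat); \daimonStrat) - \Prob_{\strat}[\infoState] \RewardFn_{\infoState}(\strat; \daimonStrat)\bigr]\\
    &\quad + \hspace{-2.2em} \sum_{\infoState' \in \bigcup_{a \in \Actions(\infoState)} \InfoStateSet_{\Actions}(\infoState, a)} \hspace{-2.2em} \regret_{\infoState'}(\dev, \strat; \daimonStrat),
\end{align*}
so the sum over child full regrets falls out essentially for free. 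The substantive content of the lemma then reduces to identifying the single-step bracket at $\infoState$ with the immediate regret $\regret_{\infoState}(\dev_{\preceq \infoState}, \strat; \daimonStrat) = \cfv_{\infoState}(\dev_{\preceq \infoState}(\strat); \daimonStrat) - \cfv_{\infoState}(\dev_{\prec \infoState}(\strat); \daimonStrat)$.

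To carry out that identification, I will apply the same decomposition lemma to $\cfv_{\infoState}(\dev_{\preceq \infoState}(\strat); \daimonStrat)$ and $\cfv_{\infoState}(\dev_{\prec \infoState}(\strat); \daimonStrat)$, then exploit three structural consequences of perfect recall: (i) $\Prob_{\pi}[\infoState]$ depends only on the agent's play strictly before $\infoState$, so it agrees for $\dev(\strat)$, $\dev_{\preceq \infoState}(\strat)$, and $\dev_{\prec \infoState}(\strat)$; (ii) $\RewardFn_{\infoState}(\pi; \daimonStrat)$ depends only on the agent's play at $\infoState$, so $\dev(\strat)$ and $\dev_{\preceq \infoState}(\strat)$ agree on this term, as do $\strat$ and $\dev_{\prec \infoState}(\strat)$; and (iii) for each child $\infoState' \in \InfoStateSet_{\Actions}(\infoState, a)$ the reach factors as $\Prob_{\pi}[\infoState'] = \Prob_{\pi}[\infoState] \pi(a \given \infoState)$, and $\cfv^{\CF}_{\infoState'}$ depends only on $\pi$'s play at or beyond $\infoState'$. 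Using (iii) the child-sum residuals from the immediate regret expansion fold back into a single-step correction of the form $\Prob_{\dev(\strat)}[\infoState] \sum_a ([\dev(\strat)](a \given \infoState) - \strat(a \given \infoState)) \sum_{\infoState' \in \InfoStateSet_{\Actions}(\infoState, a)} \cfv^{\CF}_{\infoState'}(\strat; \daimonStrat)$, which combines with the $\RewardFn_{\infoState}$ differences to reproduce exactly the bracket in the full-regret expansion.

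The main obstacle is this last recombination: altering the action distribution at $\infoState$ changes the reach to every child $\infoState'$, and the mismatched reach-weighted terms have to be reorganized carefully through the factorization in (iii) before they telescope back into a clean single-step expression. Once that bookkeeping is verified, the identification of the bracket with $\regret_{\infoState}(\dev_{\preceq \infoState}, \strat; \daimonStrat)$ follows and the recursive decomposition is established.
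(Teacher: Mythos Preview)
Your plan has a genuine gap at the step you call the ``substantive content'': the single-step reward bracket is \emph{not} the immediate regret. Expanding $\regret_{\infoState}(\dev_{\preceq \infoState}, \strat; \daimonStrat)$ via the preceding lemma (using your observations (i) and (ii)) gives
\[
\Prob_{\dev(\strat)}[\infoState]\bigl(\RewardFn_{\infoState}(\dev(\strat); \daimonStrat) - \RewardFn_{\infoState}(\strat; \daimonStrat)\bigr)
\;+\;
\Prob_{\dev(\strat)}[\infoState]\sum_{a}\bigl([\dev\strat](a\mid\infoState) - \strat(a\mid\infoState)\bigr)\hspace{-0.5em}\sum_{\infoState' \in \InfoStateSet_{\Actions}(\infoState, a)}\hspace{-0.5em}\cfv^{\CF}_{\infoState'}(\strat; \daimonStrat),
\]
exactly the expression you write down. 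For this to coincide with the bracket $\Prob_{\dev(\strat)}[\infoState]\,\RewardFn_{\infoState}(\dev(\strat); \daimonStrat) - \Prob_{\strat}[\infoState]\,\RewardFn_{\infoState}(\strat; \daimonStrat)$ you would need the child residual---a weighted combination of counterfactual child values $\cfv^{\CF}_{\infoState'}(\strat; \daimonStrat)$---to collapse to $\bigl(\Prob_{\dev(\strat)}[\infoState] - \Prob_{\strat}[\infoState]\bigr)\RewardFn_{\infoState}(\strat; \daimonStrat)$, and these two quantities are unrelated. Concretely, take $\dev$ that agrees with $\strat$ at $\infoState$ but differs strictly before $\infoState$: the child residual is identically zero while the reach mismatch $\bigl(\Prob_{\dev(\strat)}[\infoState] - \Prob_{\strat}[\infoState]\bigr)\RewardFn_{\infoState}(\strat; \daimonStrat)$ is generally not. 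So the ``recombination'' you flag as the main obstacle does not telescope the way you claim, and since your first step already pinned down the child sum as $\sum_{\infoState'}\regret_{\infoState'}(\dev,\strat;\daimonStrat)$, there is nowhere left to absorb the discrepancy.

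The paper's argument sidesteps this by inserting $\pm\,\cfv_{\infoState}(\dev_{\preceq \infoState}(\strat); \daimonStrat)$ into the full regret at the outset. One pair is then literally $\regret_{\infoState}(\dev_{\preceq \infoState}, \strat; \daimonStrat)$, with all of its child-value content intact; the other pair, $\cfv_{\infoState}(\dev(\strat); \daimonStrat) - \cfv_{\infoState}(\dev_{\preceq \infoState}(\strat); \daimonStrat)$, compares two strategies that agree at $\infoState$ and at every predecessor, so applying the preceding lemma to \emph{this} difference makes the $\Prob[\infoState]\,\RewardFn_{\infoState}$ terms cancel exactly and leaves only child-value differences, which become the child full regrets. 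The child residuals you isolate belong to the immediate-regret term, not to the reward bracket, and the add--subtract trick is precisely what keeps them there.
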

\begin{proof}
  \begin{align*}
    &\regret_{\infoState}(\dev, \strat; \daimonStrat)\nonumber\\
      &=
        \cfv_{\infoState}(\dev(\strat); \daimonStrat)
        \rlap{$\overbrace{\phantom{- \cfv_{\infoState}(\dev_{\preceq \infoState}(\strat); \daimonStrat)+ \cfv_{\infoState}(\dev_{\preceq \infoState}(\strat); \daimonStrat)\,}}^0$}
        - \cfv_{\infoState}(\dev_{\preceq \infoState}(\strat); \daimonStrat)
        + \underbrace{
          \cfv_{\infoState}(\dev_{\preceq \infoState}(\strat); \daimonStrat)
          - \cfv_{\infoState}(\strat; \daimonStrat)
        }_{\regret_{\infoState}(\dev_{\preceq \infoState}, \strat; \daimonStrat)}\\
      &=
        \regret_{\infoState}(\dev_{\preceq \infoState}, \strat; \daimonStrat)\\
        &\quad+ \underbrace{
          \Prob_{\dev(\strat)}[\infoState] \RewardFn_{\infoState}(\strat; \daimonStrat)
          - \Prob_{\dev_{\preceq \infoState}(\strat)}[\infoState] \RewardFn_{\infoState}(\strat; \daimonStrat)
        }_{0}\\
        &\quad+ \sum_{a \in \Actions(\infoState)}
          \underbrace{
            \cfv_{\updateFn_{\Actions}(\infoState, a)}(\dev(\strat); \daimonStrat)
            - \cfv_{\updateFn_{\Actions}(\infoState, a)}(\dev_{\preceq \infoState}(\strat); \daimonStrat)
          }_{\regret_{\updateFn_{\Actions}(\infoState, a)}(\dev, \strat; \daimonStrat)}.\\
      \shortintertext{Applying \cref{eq:passiveToActiveValue} to sum over active information states,}
      &=
        \regret_{\infoState}(\dev_{\preceq \infoState}, \strat; \daimonStrat)
        + \hspace{-2em} \sum_{
          \infoState' \in \bigcup_{a \in \Actions(\infoState)}
            \InfoStateSet_{\Actions}(\infoState, a)
        } \hspace{-1em}\underbrace{
            \cfv_{\infoState'}(\dev(\strat); \daimonStrat)
            - \cfv_{\infoState'}(\strat; \daimonStrat)
          }_{\regret_{\infoState'}(\dev, \strat; \daimonStrat)}.\qedhere
  \end{align*}
\end{proof}
\begin{theorem}
  If a perfect recall agent's cumulative immediate regret with respect to $\DevSet \subseteq \DevSet^{\SWAP}_{\PureStrategySet}$ at each information state $\infoState$ in a repeated finite-horizon POHP is upper bounded by $f(T) \ge 0$, $f(T) \in \smallo{T}$ after $T$ rounds, then the agent's cumulative full regret at each $\infoState$ is sublinear, upper bounded according to
  $\sum_{t = 1}^T \regret_{\infoState}(\dev, \strat^t; \daimonStrat^t)
    \le \abs{\InfoStateSet_{\infoState, \Actions}} f(T)$,
  where $\InfoStateSet_{\infoState, \Actions} = \set{ \infoState' \in \InfoStateSet_{\Actions} \where \infoState \preceq \infoState' }$ is the active information states in the sub-POHP rooted at $\infoState$.
  Such an agent is therefore observably sequentially hindsight rational with respect to $\DevSet$.
\end{theorem}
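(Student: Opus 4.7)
The plan is to unroll \cref{lemma:regretDecomposition} recursively over the sub-POHP rooted at $\infoState$ so that cumulative full regret at $\infoState$ is expressed as a sum of cumulative immediate regrets at each descendant active information state, and then apply the immediate regret hypothesis termwise.

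First I would prove by structural induction on the sub-POHP rooted at $\infoState$ (whose depth is finite by the finite-horizon assumption) the identity
\begin{align*}
  \regret_{\infoState}(\dev, \strat; \daimonStrat)
    = \sum_{\infoState' \in \InfoStateSet_{\infoState, \Actions}}
      \regret_{\infoState'}(\dev_{\preceq \infoState'}, \strat; \daimonStrat).
\end{align*}
At a leaf active information state (one whose child active set from \cref{lemma:regretDecomposition} is empty) the identity collapses to $\regret_{\infoState}(\dev, \strat; \daimonStrat) = \regret_{\infoState}(\dev_{\preceq \infoState}, \strat; \daimonStrat)$. For the inductive step, \cref{lemma:regretDecomposition} peels off the immediate regret at $\infoState$ and leaves a sum of full regrets at the strictly successive active information states, each of which expands by the induction hypothesis. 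Perfect recall combined with timed updates guarantees a unique parent active information state for each non-root descendant, so the descendant sets of distinct children partition $\InfoStateSet_{\infoState, \Actions} \setminus \set{\infoState}$ and the decomposition telescopes cleanly without double counting.

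Next I would sum over $t = 1, \ldots, T$, swap the order of summation, and apply the hypothesized immediate regret bound termwise:
\begin{align*}
  \sum_{t = 1}^T \regret_{\infoState}(\dev, \strat^t; \daimonStrat^t)
    = \sum_{\infoState' \in \InfoStateSet_{\infoState, \Actions}}
      \sum_{t = 1}^T \regret_{\infoState'}(\dev_{\preceq \infoState'}, \strat^t; \daimonStrat^t)
    \le \abs{\InfoStateSet_{\infoState, \Actions}} f(T).
\end{align*}
Each inner sum is a cumulative immediate regret at $\infoState'$ with respect to the truncation of some $\dev \in \DevSet$, which is bounded by $f(T)$ by assumption. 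Observable sequential hindsight rationality (\cref{def:obs-seq-hindsight-rationality}) then follows by dividing by $T$ and letting $T \to \infty$, since $f(T) \in \smallo{T}$ and $\abs{\InfoStateSet_{\infoState, \Actions}}$ is a finite constant under the finite-horizon assumption.

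The main obstacle I anticipate is pinning down the tree structure of $\InfoStateSet_{\infoState, \Actions}$ that makes the inductive descendant counting exact. Perfect recall is essential here: without a unique parent active information state for each non-root descendant, the children's descendant sets could overlap and the partition would fail; the finite-horizon and timed-update assumptions likewise enter by bounding the recursion depth and ensuring no information state repeats on a trajectory. Once that structural fact is in hand, the remainder is linearity of summation together with a direct appeal to the hypothesized immediate regret bound.
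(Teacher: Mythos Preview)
Your proposal is correct and follows essentially the same approach as the paper: both arguments unroll \cref{lemma:regretDecomposition} recursively from the leaves toward $\infoState$, so that full regret at $\infoState$ is the sum of immediate regrets over $\InfoStateSet_{\infoState, \Actions}$, and then bound each of the $\abs{\InfoStateSet_{\infoState, \Actions}}$ terms by $f(T)$. Your version is more explicit about the tree structure (unique parents under perfect recall, disjoint descendant sets) than the paper's brief sketch, but the underlying argument is the same.
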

\begin{proof}
  Working from each terminal information state where the full and immediate regret are equal toward $\infoState$ at the root of any given sub-POHP, we recursively bound the cumulative full regret at every information state according to \cref{lemma:regretDecomposition}.
  Every active information state adds at most $f(T)$ to the cumulative full regret at $\infoState$ and there are $\abs{\InfoStateSet_{\infoState, \Actions}}$ active information states in $\infoState$'s sub-POHP so the cumulative full regret at $\infoState$ is no more than $\abs{\InfoStateSet_{\infoState, \Actions}} f(T)$.
\end{proof}

\section{Conclusion}

The POHP formalism may be useful in modeling continual learning problems where environments are expansive, unpredictable, and dynamic.
Good performance here demands that the agent continually learns, adapts, and re-evaluates their assumptions.
We suspect that hindsight rationality could serve as the learning objective for such problems if it could be formulated for a single agent lifetime rather than over a repeated POHP.

Our analysis of general immediate regret minimization for POHPs and the impossibility result of \cref{thm:noImmediateRegretLearningIsImpossible} brings up questions about how far this procedure can be generalized.
Regret decomposition is based on a perfect-recall, realization-weighted variant of \textcite{kakade2003sample}'s performance difference lemma (Lemma 5.2.1).
\textcite{ltbc2021} use this to show how the counterfactual regret minimization (CFR)~\parencite{cfr} (EFR with counterfactual deviations~\parencite{edl2021}) can be applied to continuing, discounted MDPs with reward uncertainty.
The POHP formalism can perhaps allow us to better understand when immediate and full regret can be minimized without perfect recall by considering \textcite{lanctot2012no}'s well-formed-game conditions together with \textcite{ltbc2021}'s analysis.

The POHP formalism allows agents to determine their own representation of the environment.
This opens the way to direct discussions and comparisons of representations and updating schemes.
One particular direction that is made natural by the POHP model's action--observation interface is predictive state representations (PSRs)~\parencite{singh2003psr,singh2012psrSystemDynamicsMatrix}.
While PSRs were developed to model Markovian dynamical systems with at most one controller, the POHP model could facilitate an extension to multi-agent settings.

\section*{Acknowledgments}
Dustin Morrill and Michael Bowling are supported by the Alberta Machine Intelligence Institute (Amii), CIFAR, and NSERC.
Amy Greenwald is supported in part by NSF Award CMMI-1761546.
Thanks to Michael's and Csaba Szepesv\'{a}ri's research groups for feedback that helped to refine early versions of the POHP formalism.

\bibliography{references}

\end{document}